\newtheorem{theorem}{Theorem}
\newtheorem{definition}{Definition}
\newtheorem{assumption}{Assumption}
\newcommand{\R}{\mathbb{R}}
\newcommand{\norm}[1]{\left\lVert#1\right\rVert}
\newcommand{\itp}{\hat{\otimes}_\varepsilon}
\title{Learning solution operators of PDEs defined on varying domains via MIONet}
\author[1,2]{Shanshan Xiao}
\author[3,*]{Pengzhan Jin}
\author[1,2]{Yifa Tang}
\affil[1]{LSEC, ICMSEC, Academy of Mathematics and Systems Science, Chinese Academy of Sciences, Beijing 100190,
China}
\affil[2]{School of Mathematical Sciences, University of Chinese Academy of Sciences, Beijing 100049, China}
\affil[3]{School of Mathematical Sciences, Peking University, Beijing 100871, China}
\affil[*]{Corresponding author. E-mail: jpz@pku.edu.cn}
\date{}
\begin{document}
	
\maketitle
\begin{abstract}
    In this work, we propose a method to learn the solution operators of PDEs defined on varying domains via MIONet, and theoretically justify this method. We first extend the approximation theory of MIONet to further deal with metric spaces, establishing that MIONet can approximate mappings with multiple inputs in metric spaces. Subsequently, we construct a set consisting of some appropriate regions and provide a metric on this set thus make it a metric space, which satisfies the approximation condition of MIONet. Building upon the theoretical foundation, we are able to learn the solution mapping of a PDE with all the parameters varying, including the parameters of the differential operator, the right-hand side term, the boundary condition, as well as the domain. Without loss of generality, we for example perform the experiments for 2-d Poisson equations, where the domains and the right-hand side terms are varying. The results provide insights into the performance of this method across convex polygons, polar regions with smooth boundary, and predictions for different levels of discretization on one task. We also show the additional result of the fully-parameterized case in the appendix for interested readers. Reasonably, we point out that this is a meshless method, hence can be flexibly used as a general solver for a type of PDE.
\end{abstract}

\section{Introduction}

In recent years, scientific machine learning (SciML) has achieved remarkable success in computational science and engineering \cite{karniadakis2021physics}. Due to the powerful approximation ability of neural networks (NNs) \cite{e2019barron,hanin2019universal,hornik1989multilayer,hornik1990universal,siegel2022high}, different methods are proposed to solve PDEs by parameterizing the solutions via NNs with the loss functions constructed by strong/variation forms of PDEs, such as the PINNs \cite{cai2021physics,lu2021physics,pang2019fpinns,raissi2019physics}, the deep Ritz method \cite{yu2018deep}, and the deep Galerkin method \cite{sirignano2018dgm}. A drawback of these methods is their slow solving speed, as one has to re-train the NN as long as the parameters of the PDE are changed. To fast obtain the solutions of parametric PDEs, several end-to-end methods called neural operators, are proposed to directly learn the solution operators of PDEs. The DeepONet \cite{lu2019deeponet,lu2021learning} was firstly proposed in 2019, which employs a branch net and a trunk net to encode the input function and the solution respectively, achieving fast prediction for parametric PDEs. In 2020, the FNO \cite{li2020neural,li2020fourier} was proposed to learn the solution mappings with the integral kernel parameterized in Fourier space. For the same topic, there are lots of works \cite{gupta2021multiwavelet,he2023mgno,jin2022mionet,rahman2022u,raonic2023convolutional,wu2023solving} developing the field of neural operators. Among the neural operators, the MIONet \cite{jin2022mionet} plays an important role in dealing with complicated cases with multiple inputs, which generalizes the theory and the architecture of the DeepONet. The DeepONet and the MIONet are in fact both derived from the tensor product of Banach spaces, which also leads to tensor-based machine learning models for eigenvalue problems \cite{hu2023experimental,wang2022tensor}. Moreover, the trunk nets of ONets series provide the convenience of differentiation for the output functions, hence training a neural operator without data is being possible via utilizing the PDEs information, which is studied as physics-informed DeepONet/MIONet \cite{wang2021learning,zheng2023state}.

One limitation of these methods lies in their exclusive treatment of PDEs with fixed domains. As many real-world PDE problems involve diverse domains, there is a need to develop the capability of neural operators for PDE problems with varying regions. In the realm of neural operators, limited researches have been dedicated to address the issue of varying domains, in which there are fundamental difficulties. Recently, \cite{goswami2022deep} employs the DeepONet to adapt PDEs with different geometric domains. They enable the transformation of results from one domain to another via the transfer learning. However, such treatment still requires a re-training process for the model once the domain of the PDE is changed. Another work related to PDEs on varying domains is Geo-FNO \cite{li2022fourier}, an extension of FNO. Geo-FNO enhances the versatility in managing arbitrary domains by transforming the physical space into a regular computational space through diffeomorphism. The primary approach involves using diffeomorphisms to convert meshes in physical space into uniform meshes. Geo-FNO is capable of learning the mapping from the parameterized domain, the initial condition, and the boundary condition to the corresponding solution. Note that Geo-FNO is limited to addressing problems characterized by identical PDE forms but varying domains.

Recall that leveraging MIONet allows us to acquire the solution operators of PDEs with additional inputs, not only the initial and the boundary conditions, but also the parameters in PDEs. In this work, we present a method built upon MIONet, enabling the learning of solution operators for PDEs defined on varying regions, especially, it allows not only the regions and the initial/boundary conditions but also the parameters in PDEs to be changing. Consequently, the method can predict solutions for fully-parameterized PDEs. Our primary approach is to conceptualize the disjoint union of infinite regions as a metric space, and then extend the theory of MIONet from Banach spaces to metric spaces. Building upon these theoretical foundations, we initially define the metric space $U$ comprised of polar regions and confirm that $U$ satisfies the projection assumption necessary for the approximation condition of MIONet. Subsequently, we project the input function space
\begin{equation}
X = \bigsqcup_{\Omega\in U}C (\overline{\Omega})
\end{equation}
onto the Cartesian product of the metric space $U$ and the Banach space $C(B(0,1))$. Through this transformation, we acquire an equivalent solution mapping 
\begin{equation}
\hat{\mathcal{G}}: U \times C(B(0,1))\rightarrow C(B(0,1)),
\end{equation}
which can be learned by MIONet under our new theory. We are then capable of predicting solutions for fully-parameterized PDEs. We outline the principal contributions of our work as follows:
\begin{itemize}
	\item We extend the theory of MIONet to deal with metric spaces.
    \item We construct a space comprised of appropriate regions and establish a well-defined metric on this space. We then prove that this metric space satisfies the approximation condition of MIONet.
    \item We propose a MIONet-based algorithm tailored for PDEs defined on varying domains, which is able to directly predict solutions for fully-parameterized PDEs.
\end{itemize}

The structure of this paper is as follows. We first introduce the research problem in Section \ref{sec:setup}. In Section \ref{sec:theory}, we establish the theoretical foundation of our work, and subsequently propose the method in detail. Section \ref{sec:experiment} presents the results of numerical experiments, where we evaluate our method's performance on 2-d Poisson equations over several different types of regions. Finally, we summarize our findings and contributions in Section \ref{sec:conclusions}.

\section{Problem setup}\label{sec:setup}

This research originates from the limitations of the current neural operators which mainly learn the solution operators of PDEs defined on fixed domains. In practical scenarios, PDE problems often involve varying domains.

The mapping from the input functions to the corresponding solution of a PDE defined on varying domains can be written as
\begin{equation}
    f^{1}_{\Omega_1}\times f^{2}_{\Omega_2} \times \cdots \times f^{m}_{\Omega_m} \mapsto u_{\Omega},
\end{equation}
where $\Omega_i$ are the domains that do not need to be the same as $\Omega$. $f^{i}_{\Omega_i}$ and $u_{\Omega}$ are functions defined on $\Omega_i$ and $\Omega$, respectively. 
Taking the Poisson equation as an example:
\begin{equation}\label{eq:full_problem_1}
	\begin{cases}
		-\nabla\cdot(k\nabla u)=f &\quad \mbox{in} \; \Omega,
		\\
		u = g &\quad \mbox{on} \; \partial\Omega,
	\end{cases}	
\end{equation}
then the solution mapping is
\begin{equation}\label{eq:full_problem_2}
    k_{\Omega}\times f_{\Omega} \times g_{\partial\Omega} \mapsto u_{\Omega}.
\end{equation}
Note that here $\Omega$ is not fixed, i.e., different tasks provide different $\Omega$. Now we consider the space $U$ that consists of some domains $\Omega$ in $\mathbb{R}^{d}$, and $X$ consists of the functions defined on domains in $U$, then $X$ will not be a Banach space, so that we cannot directly employ the neural operators to learn this end-to-end map. However, such a space $X$ could be equipped with an appropriate metric that deduces some necessary properties for operator regression. 

Since the difficulty lies in dealing with the varying domains, in order to facilitate the readers to understand, here we consider the simplified case
\begin{equation}
	\begin{cases}
		-\Delta u=f &\quad \mbox{in} \; \Omega,
		\\
		u = 0 &\quad \mbox{on} \; \partial\Omega,
	\end{cases}	
\end{equation}
with the solution mapping
\begin{equation}\label{eq:poisson_mapping}
    f_{\Omega}\mapsto u_{\Omega}.
\end{equation}
The case of (\ref{eq:full_problem_1}-\ref{eq:full_problem_2}) will be of no difficulty as long as this simplified case is solved. Readers can refer to Appendix \ref{app:full} for details of the fully-parameterized case.

Next we have to further develop the theory and the method for learning such mappings based on current neural operators.

\section{Theory and method}\label{sec:theory}

\subsection{MIONet for metric spaces}
In this section, we aim to extend the theory of MIONet \cite{jin2022mionet} from Banach spaces to metric spaces, i.e., to demonstrate that MIONet can deal with continuous mappings defined on metric spaces that satisfy specific conditions. Firstly, we give an assumption on the metric spaces.
\begin{assumption}[projection assumption]\label{ass:projection}
Let $X$ be a metric space with metric $d(\cdot,\cdot)$, assume that $\{\phi_n\}$ and $\{\psi_n\}$ are two sets of mappings, with $\phi_n\in C(X, \R^n)$, $\psi_n\in C(\phi_n(X), X)$, $P_n:=\psi_n\circ\phi_n\in C(X, X)$, satisfying
\begin{equation}
    \lim_{n\to\infty}\sup_{x\in K}d(x,P_n(x))=0,
\end{equation}
for any compact $K\subset X$. We say $\{\phi_n\}$ is a discretization for $X$, and $\{\psi_n\}$ is a reconstruction for $X$, $P_n$ is the corresponding projection mapping. Note that the space of the image of $\phi_n$ does not necessarily need to be $n$-dimensional, any fixed integer positively related to $n$ is permitted and will not affect the property.
\end{assumption}
\noindent The assumption is proposed to substitute for the approximation property of Schauder basis for Banach space in a weak setting.
\begin{theorem}[approximation theory]\label{thm:approximation}
    Let $X_{i}$ be metric spaces and $Y$ be a Banach space, assume that $X_{i}$ satisfies Assumption \ref{ass:projection} with the projection mapping $P_q^i=\psi_q^i\circ\phi_q^i$, $K_{i}$ is a compact set in $X_{i}$. Suppose that
    \begin{equation}
    \mathcal{G}: K_{1}\times \cdots \times K_{n} \rightarrow Y   
    \end{equation}
    is a continuous mapping, then for any $\epsilon>0$, there exist positive integers $p_{i}, q_{i}$, continuous functions $g^{i}_{j}\in C(\mathbb{R}^{q_{i}})$ and $u_{j} \in Y$ such that 
    \begin{equation}\label{eq:appr_theory}
    \sup_{v_{i}\in K_{i}} \norm{\mathcal{G}(v_{1},\cdots,v_{n}) - \sum_{j = 1}^{p}g_{j}^{1}(\phi_{q_{1}}^{1}(v_{1}))\cdots g_{j}^{n}(\phi_{q_{n}}^{n}(v_{n}))\cdot u_{j}}_Y< \epsilon.
    \end{equation}
\end{theorem}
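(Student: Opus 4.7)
The plan is to reduce the continuous approximation on the metric spaces $X_i$ to one on the Euclidean compacts $\phi_{q_i}^i(K_i)\subset\R^{q_i}$, and then invoke the existing Banach-space MIONet approximation result of \cite{jin2022mionet}. The projection assumption plays exactly the role that a Schauder basis plays in the Banach case: it furnishes a finite-dimensional surrogate $P_{q_i}^i$ that uniformly approximates the identity on the compact set $K_i$. Combined with uniform continuity of $\mathcal{G}$ on the compact product $K_1\times\cdots\times K_n$, this reduction lets us replace each infinite-dimensional input by its finite-dimensional encoding $\phi_{q_i}^i(v_i)$.

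First I would fix a target accuracy $\delta>0$, to be calibrated later. By Tychonoff's theorem the product $K:=K_1\times\cdots\times K_n$ is compact, so $\mathcal{G}$ is uniformly continuous there. Choose $\eta>0$ such that $d_i(v_i,v_i')<\eta$ for every $i$ implies $\norm{\mathcal{G}(v)-\mathcal{G}(v')}_Y<\delta$. By Assumption \ref{ass:projection} applied on each compact $K_i$, one can pick integers $q_i$ with $\sup_{v_i\in K_i}d_i(v_i,P_{q_i}^i(v_i))<\eta$. This yields
\begin{equation*}
\sup_{(v_1,\ldots,v_n)\in K}\left\|\mathcal{G}(v_1,\ldots,v_n)-\mathcal{G}(P_{q_1}^1 v_1,\ldots,P_{q_n}^n v_n)\right\|_Y<\delta.
\end{equation*}

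Next, set $A_i:=\phi_{q_i}^i(K_i)\subset\R^{q_i}$, which is compact as a continuous image of a compact set, and introduce $\tilde{\mathcal{G}}:A_1\times\cdots\times A_n\to Y$ by $\tilde{\mathcal{G}}(y_1,\ldots,y_n):=\mathcal{G}(\psi_{q_1}^1(y_1),\ldots,\psi_{q_n}^n(y_n))$. Since each $\psi_{q_i}^i$ is continuous on $\phi_{q_i}^i(X_i)$ and $\mathcal{G}$ is continuous, so is $\tilde{\mathcal{G}}$; moreover $\mathcal{G}(P_{q_1}^1 v_1,\ldots,P_{q_n}^n v_n)=\tilde{\mathcal{G}}(\phi_{q_1}^1 v_1,\ldots,\phi_{q_n}^n v_n)$. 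Thus $\tilde{\mathcal{G}}$ is a continuous $Y$-valued map on a compact subset of $\R^{q_1+\cdots+q_n}$. Applying the Banach-space MIONet approximation theorem with input spaces $\R^{q_i}$ (which carry the standard Schauder bases) and compact sets $A_i$ produces an integer $p$, continuous scalar functions $\tilde{g}_j^i\in C(A_i)$, and elements $u_j\in Y$ with
\begin{equation*}
\sup_{y\in A_1\times\cdots\times A_n}\left\|\tilde{\mathcal{G}}(y_1,\ldots,y_n)-\sum_{j=1}^{p}\tilde{g}_j^1(y_1)\cdots\tilde{g}_j^n(y_n)\,u_j\right\|_Y<\delta.
\end{equation*}
By the Tietze extension theorem, extend each $\tilde{g}_j^i$ to $g_j^i\in C(\R^{q_i})$. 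Taking $\delta=\epsilon/2$ and combining the two estimates via the triangle inequality delivers \eqref{eq:appr_theory}.

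The main obstacle is the clean invocation of the Banach-space MIONet theorem for continuous $Y$-valued maps on compact subsets of Euclidean space. If one prefers a self-contained derivation, the needed density follows from the identification $C(A_1\times\cdots\times A_n;Y)\cong C(A_1)\itp\cdots\itp C(A_n)\itp Y$ of continuous $Y$-valued maps with the injective tensor product: density of the algebraic tensor product is classical and can be shown by first using total boundedness of the compact image $\tilde{\mathcal{G}}(A_1\times\cdots\times A_n)\subset Y$ to approximate $\tilde{\mathcal{G}}$ by a finite-rank $Y$-valued map, then applying Stone--Weierstrass on the compact product $A_1\times\cdots\times A_n$ to decompose the scalar coefficients as sums of products of functions of single variables. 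The remaining bookkeeping of choosing $\eta$, $q_i$, and the extension step is routine.
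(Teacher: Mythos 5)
Your proof is correct in substance, but it reverses the order of the two key steps relative to the paper. The paper first applies the injective tensor product identification $C(K_1\times\cdots\times K_n,Y)\cong C(K_1)\itp\cdots\itp C(K_n)\itp Y$ directly on the metric-space compacts $K_i$, obtaining scalar factors $f_j^i\in C(K_i)$, and only then uses Assumption \ref{ass:projection} to replace each $f_j^i$ by $f_j^i\circ P_{q_i}^i=(f_j^i\circ\psi_{q_i}^i)\circ\phi_{q_i}^i$, which is a small perturbation by uniform continuity of the finitely many $f_j^i$. You instead first replace $\mathcal{G}$ by $\mathcal{G}\circ(P_{q_1}^1,\ldots,P_{q_n}^n)$ using uniform continuity of $\mathcal{G}$ itself, push everything down to the Euclidean compacts $A_i=\phi_{q_i}^i(K_i)$, and invoke the classical Banach-space (indeed finite-dimensional-input) MIONet result there. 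Your route has the advantage of reducing to the already-established Euclidean-compact case so that the tensor-product density argument never has to be re-examined on abstract metric compacts; the paper's route avoids the uniform-continuity step on $\mathcal{G}$ and makes more transparent that the \emph{only} property of Schauder bases used in the original proof is the one abstracted into Assumption \ref{ass:projection}. Your explicit Tietze extension of $\tilde g_j^i$ to $C(\R^{q_i})$ is a detail the paper silently skips.

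One point you should patch: your $\tilde{\mathcal{G}}(y_1,\ldots,y_n)=\mathcal{G}(\psi_{q_1}^1(y_1),\ldots,\psi_{q_n}^n(y_n))$ is not obviously well defined, because $\psi_{q_i}^i(A_i)=P_{q_i}^i(K_i)$ need not be contained in $K_i$, the domain of $\mathcal{G}$. You need to first extend $\mathcal{G}$ continuously (e.g.\ by Dugundji's theorem, as the paper does elsewhere for $\hat{\mathcal{G}}$) to a set containing $K_1\cup\bigcup_q P_q^1(K_1)$ in each factor --- this union is still compact since $P_q^i(K_i)$ converges uniformly to $K_i$ --- and take the uniform-continuity modulus of the extension on that larger compact set. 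The paper's version of the proof has the mirror-image of the same gap (composing $f_j^i\in C(K_i)$ with $P_{q_i}^i$ requires the same extension), so this is a shared, routine repair rather than a flaw specific to your argument.
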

\begin{proof}
Note that the proof of the approximation theory of MIONet for Banach spaces in fact only utilizes the approximation result of Schauder basis, and it does not involve other properties of the Banach spaces. Such an approximation result can be replaced by Assumption \ref{ass:projection}. Here we simply show the key points.

Based on the injective tensor product
\begin{equation}\label{eq:itp}
    C(K_1\times K_2\times\cdots\times K_n,Y)\cong C(K_1)\itp C(K_2)\itp\cdots\itp C(K_n)\itp Y,
\end{equation}
we obtain
\begin{equation}\label{eq:itp_appr}
\norm{\mathcal{G}-\sum_{j=1}^p f_{j}^1\cdot f_{j}^2\cdots f_{j}^n\cdot u_j}_{C(K_1\times K_2\times\cdots\times K_n,Y)}<\epsilon
\end{equation}
for some $f_j^i\in C(K_i)$ and $u_j\in Y$. Assumption \ref{ass:projection} shows that there exist sufficiently large $q_i$, such that
\begin{equation}
\norm{\mathcal{G}-\sum_{j=1}^p f_{j}^1(P^1_{q_1}(\cdot))\cdot f_{j}^2(P^2_{q_2}(\cdot))\cdots f_{j}^n(P^n_{q_n}(\cdot))\cdot u_j}_{C(K_1\times K_2\times\cdots\times K_n,Y)}<\epsilon.
\end{equation}
Denote $g^i_j:=f^i_j\circ\psi^i_{q_i}$, then we immediately obtain \eqref{eq:appr_theory}.
\end{proof}

\subsection{Setting for varying domains}
In this work, we discuss a special type of regions in $\R^2$. The strategy can also be applied to higher dimensional case with similar treatment.
\begin{definition}\label{def:polar_region}
    We define \textbf{polar regions} as follows: Consider the centroid of an open region $\Omega\subset\R^2$ as the original point, then we refer to $\Omega$ as a polar region if its boundary $\partial\Omega$ can be expressed as a Lipschitz continuous function with a period of $2\pi$ under polar coordinates.
\end{definition}
\noindent Now we considering two spaces. Denote
\begin{equation}
    U:=\{ \Omega\subset \R^2 ~|~ \Omega ~ \textrm{is a polar region with a Lipschitz coefficient defined above no more than $L$}\}
\end{equation}
and
\begin{equation}
    X:=\bigsqcup_{\Omega\in U}C\left(\overline{\Omega}\right).
\end{equation}
We will provide metrics on these two spaces, thus make $U$ and $X$ two metric spaces.

Firstly, we define a transformation $\alpha_{\Omega}$ that maps the closed polar region $\overline{\Omega}$ onto the closed unit ball $B(0,1)$. Assume that $O'$ and $O$ are the centroids of $\Omega$ and $B(0,1)$ respectively. For any point $p\in\overline{\Omega}$, let $\hat{p}$ be the unique intersection point in $\{O'+t(p-O')|t\geq0\}\cap\partial\Omega$. We denote the angle anticlockwise from $e_0:=(1,0)$ to $p-O'$ as $\theta$. Now we map $\overline{\Omega}$ onto $B(0,1)$ as
\begin{equation}
\begin{split}
    \alpha_{\Omega}: \overline{\Omega} &\rightarrow B(0,1) \\
         p &\mapsto \frac{|p-O'|}{|\hat{p}-O'|}(\cos(\theta),\sin(\theta)).
\end{split}
\end{equation}
Clearly, $\alpha_\Omega$ is a bijection from $\overline{\Omega}$ to $B(0,1)$, and there exists an inverse mapping $\alpha_\Omega^{-1}$. Both $\alpha_\Omega$ and $\alpha_\Omega^{-1}$ are continuous. An illustration is shown in Figure \ref{fig:transformation}.

\begin{figure}[htbp]
\centering
\includegraphics[width=0.65\textwidth]{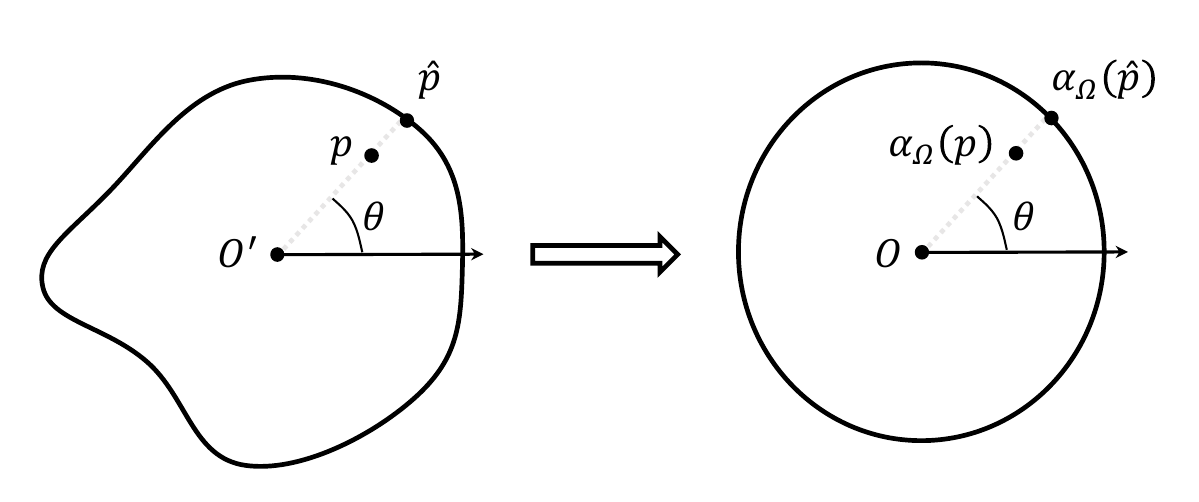}
\caption{The mapping $\alpha_{\Omega}$.}
\label{fig:transformation}
\end{figure}

With the transformation $\alpha_{\Omega}$, we can map $X$ onto the Cartesian product of $U$ and $C(B(0,1))$ as
\begin{equation}
\begin{split}
    \sigma:  X=\bigsqcup_{\Omega\in U}C (\overline{\Omega}) & \rightarrow U\times C(B(0,1)) \\
     f_{\Omega} & \mapsto (\Omega, f_{\Omega}\circ \alpha_{\Omega}^{-1}).
\end{split}
\end{equation}
It is easy to verify that $\sigma$ is a bijection with an inverse mapping $\sigma^{-1}(\Omega, f) = f\circ \alpha_{\Omega}$, thus we expect $U\times C(B(0,1))$ to be a metric space. Following this, we first give a metric on $U$. Let $\Omega_1,\Omega_2\in U$, then we define 
\begin{equation}
    d_{U}(\Omega_1,\Omega_2) := d_{E}(O_{\Omega_{1}}, O_{\Omega_{2}}) + \sup_{\theta \in [0,2\pi]}\vert b_{\Omega_{1}}(\theta) -  b_{\Omega_{2}}(\theta)\vert 
\end{equation}
where $O_{\Omega}$ is the centroid of $\Omega$, $d_E$ is the Euclidean metric, and $b_{\Omega}(\theta)$ denotes the boundary function of $\partial\Omega$ under polar coordinates as defined in Definition \ref{def:polar_region}. Through the metric on $U$ and the mapping $\sigma$, we can obtain the metric on $X$. Assuming $f_{\Omega_1} , f_{\Omega_2}\in X$, we define 
\begin{equation}
    d_{X}(f_{\Omega_1}, f_{\Omega_2}) = d_{U}(\Omega_1, \Omega_2) + \norm{f_{\Omega_1}\circ\sigma_{\Omega_1}^{-1} - f_{\Omega_2}\circ\sigma_{\Omega_2}^{-1} }_{C(B(0,1))}.
\end{equation}
It is easy to verify that these two metrics are well-defined.

With these foundations in place, we present the targeted mapping to be learned. Assuming $K$ is a compact set in $X$, we have the following mapping diagram:
\begin{equation}
\begin{split}
    \mathcal{G}:  K & \longrightarrow  X \\
       \updownarrow& ~~~~~~ \updownarrow \\
       \tilde{\mathcal{G}}:  \sigma(K) & \longrightarrow  U\times C(B(0,1))
\end{split}
\label{mapG}
\end{equation}
where $\tilde{\mathcal{G}} = \sigma \circ \mathcal{G} \circ \sigma^{-1} \in C(\sigma(K), U \times C(B(0,1)))$. Suppose that $\mathcal{G}$ is the solution mapping of the Poisson equation as \eqref{eq:poisson_mapping}, then $\mathcal{G}$ keeps the domain unchanged, so that we can define another mapping $\hat{\mathcal{G}}$ based on $\tilde{\mathcal{G}} $ as
\begin{equation}
    \tilde{\mathcal{G}}(\Omega, f) = (\Omega, \hat{\mathcal{G}}(\Omega, f)),\quad\hat{\mathcal{G}}\in C(\sigma(K), C(B(0,1))).
\label{hatG}
\end{equation}
Here $\hat{\mathcal{G}}$ is a mapping defined on $\sigma(K)$. To use MIONet for the approximation of $\hat{\mathcal{G}}$, we need to extend it to a larger domain. Consider the following two projections:
\begin{equation}
\begin{split}
    \pi_{1}:  U \times C(B(0,1)) & \rightarrow U,\quad\quad \pi_{2}:  U \times C(B(0,1)) \rightarrow C(B(0,1)), \\
             (\Omega, f)  & \mapsto \Omega\quad\quad\quad\quad\quad\quad\quad\quad\  (\Omega, f)\mapsto f  \\
\end{split}
\end{equation}
and denote the region $\pi_{1}(\sigma(K))\times \pi_{2}(\sigma(K))$ as $\tilde{K}$. Since $\pi_{1}$ and $\pi_{2}$ are continuous mappings, $\tilde{K}\subset U\times C(B(0,1))$ is a compact set. To demonstrate that $\hat{\mathcal{G}}$ can be extended to $\tilde{K}$, we invoke Dugundji's theorem \cite{dugundji1951extension}, which establishes that any continuous mapping from a compact set in a metric space to a locally convex linear space can be extended to the entire metric space. As a result, we have extended $\hat{\mathcal{G}}$ to $\tilde{K}$, i.e.,
\begin{equation}
\begin{split}
    \hat{\mathcal{G}}:  \tilde{K}=\pi_{1}(\sigma(&K))\times \pi_{2}(\sigma(K)) \longrightarrow  C(B(0,1)). \\
       \cap& ~~~~~~~~~~~~~~ \cap \\
       U & ~~~~~~~~C(B(0,1))
\end{split}
\end{equation}

Since $C(B(0,1))$ is a Banach space with a Schauder basis, it naturally satisfies Assumption \ref{ass:projection}.  However, we still need to prove that $U$ also satisfies Assumption \ref{ass:projection}. Now we define a mapping $\phi_{n}$ on $U$. For any $\Omega \in U$, assuming $O'$ is the centroid of $\Omega$. Let $x_i$ be the unique intersection point in $\{O'+te_i|t\geq0\}\cap\partial\Omega$ for $e_i:=(\cos(\frac{2i\pi}{n}),\sin(\frac{2i\pi}{n}))$. The discretization mapping $\phi_{n}$ is then defined as
\begin{equation}
\phi_{n}(\Omega) = (x_{0}, \cdots, x_{n-1})\in \mathbb{R}^{2n},
\end{equation}
and subsequently the reconstruction mapping $\psi_{n}$ is defined as
\begin{equation}
\psi_n(x_{0}, \cdots, x_{n-1}) = \hat{\Omega}\in U,
\end{equation}
where $\hat{\Omega}$ is the polygon formed by the vertices $x_{i}$. An illustration is shown in Figure \ref{fig:projection}.

\begin{figure}[htbp]
\centering
\includegraphics[width=0.9\textwidth]{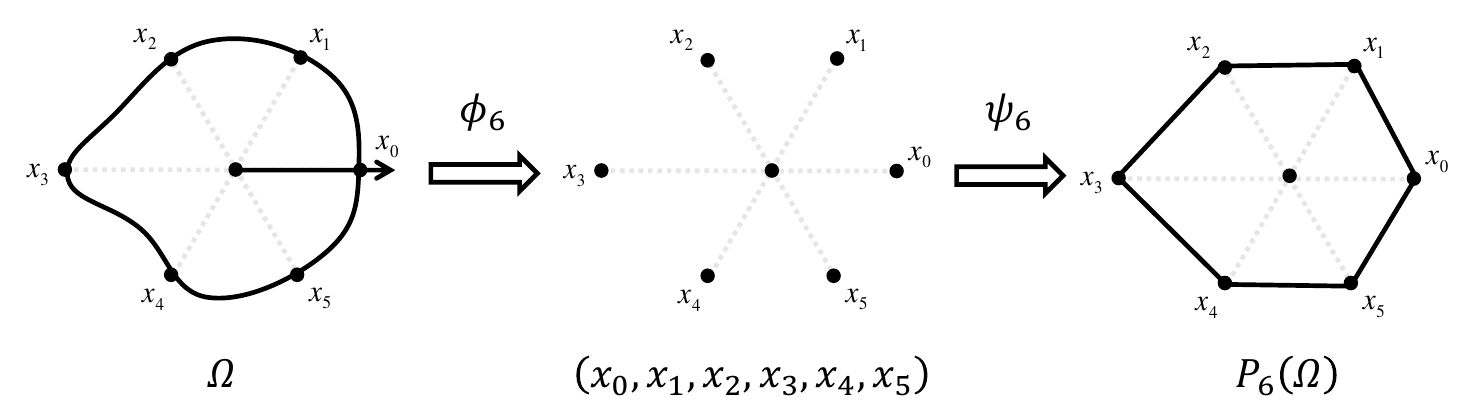}
\caption{An illustration of the discretization mapping and the reconstruction mapping.}
\label{fig:projection}
\end{figure}

Next, we prove that metric space $U$ with the mappings $\phi_{n}$ and $\psi_{n}$ satisfies Assumption \ref{ass:projection}.
\begin{theorem}\label{thm:assumption}
   The metric space $(U, d_{U})$ and the two sets of mappings $\{\phi_{n}\}$ and $\{\psi_{n}\}$ defined above satisfy Assumption \ref{ass:projection}, i.e.,
\begin{equation}
 \lim_{n\to\infty}\sup_{\Omega\in K}d_{U}(\Omega, P_{n}(\Omega)) = 0
\end{equation}
holds for any compact $K\subset U$, where $P_{n} = \psi_{n}\circ\phi_{n}$ is the projection mapping.
\end{theorem}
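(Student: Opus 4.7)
The plan is to prove the stronger quantitative estimate $\sup_{\Omega\in K}d_U(\Omega,P_n(\Omega))=O(1/n)$ through explicit geometric computations, after first harvesting uniform bounds from the compactness of $K$. The maps $\Omega\mapsto\min_\theta b_\Omega(\theta)$ and $\Omega\mapsto\max_\theta b_\Omega(\theta)$ are continuous on $(U,d_U)$ by definition of the metric (since the sup-norm difference of boundary functions is dominated by $d_U$), and the former is strictly positive for every polar region, because the centroid is by construction an interior point at which the polar parameterization is well-defined. Compactness of $K$ therefore supplies constants $0<r_{\min}\le r_{\max}$ with $r_{\min}\le b_\Omega(\theta)\le r_{\max}$ for every $\Omega\in K$ and every $\theta$; the Lipschitz bound $L$ is already uniform by definition of $U$, and the centroids $O_\Omega$ lie in a bounded set.

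I will then introduce an auxiliary boundary function to separate two distinct sources of error. Let $\hat b_n(\theta)$ denote the polar boundary function of the polygon $P_n(\Omega)$ measured from $O_\Omega$ rather than from the polygon's own centroid. On each angular arc $[\theta_i,\theta_{i+1}]$ with $\theta_i=2i\pi/n$, $\hat b_n(\theta)$ is given by an elementary chord-intersection formula involving $b_\Omega(\theta_i)$, $b_\Omega(\theta_{i+1})$, and $\theta$. The $L$-Lipschitz property of $b_\Omega$ together with the spacing $2\pi/n$ then yields $|\hat b_n(\theta)-b_\Omega(\theta)|\le C_1/n$ uniformly on $K$, which also gives both the Hausdorff distance and the symmetric-difference area $|\Omega\triangle P_n(\Omega)|$ as $O(1/n)$. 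Using the uniform lower bound on $|\Omega|,|P_n(\Omega)|$ and the uniform diameter bound, a standard stability estimate for centroids of sets with small symmetric difference then produces $|O_\Omega-O_{P_n(\Omega)}|\le C_2/n$, controlling the first summand of $d_U$.

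The final step is to convert the estimate on $\hat b_n$ into the estimate on $b_{P_n(\Omega)}$ appearing in the metric, which measures the polygon's boundary from its own centroid. I plan to show that shifting the basepoint of a polar parameterization by a vector of length $\delta$ perturbs the boundary function pointwise by at most $O(\delta)$, with constants depending only on $r_{\min}$ and on how sharply the boundary can turn relative to a ray. This is an elementary implicit-function / transversality calculation, made quantitative by the uniform lower bound $r_{\min}>0$ together with the bounded angular slope inherited from $L$. Composing the three bounds yields $d_U(\Omega,P_n(\Omega))=O(1/n)$ uniformly on $K$, which is more than enough.

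I expect the main obstacle to be this last step, because the definition of $d_U$ couples each boundary function to its own centroid and therefore forbids a clean function-space comparison; quantifying how the polar parameterization depends on the choice of basepoint is exactly where the uniform geometric constants from compactness become indispensable. A minor secondary issue I will have to dispatch is the well-definedness of $b_{P_n(\Omega)}$ for all sufficiently large $n$, namely that $O_{P_n(\Omega)}$ lies deep enough inside $P_n(\Omega)$ for rays from it to cross the polygon boundary exactly once; this follows from the $O(1/n)$ Hausdorff closeness of $P_n(\Omega)$ to $\Omega$ together with $r_{\min}>0$.
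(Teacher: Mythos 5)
Your plan is correct and follows essentially the same route as the paper's proof: bound the symmetric-difference area by $O(1/n)$ using the uniform Lipschitz and size bounds, deduce $O(1/n)$ stability of the centroid, and then handle the fact that $b_{\Omega}$ and $b_{P_n(\Omega)}$ are measured from different basepoints. The paper carries out your ``basepoint-shift'' step inline --- via the auxiliary intersection point $Q$ on $\partial\Omega$ and a law-of-cosines bound on the small angle $\alpha$ --- rather than as a standalone perturbation lemma with the intermediate function $\hat b_n$, but the content is the same and your modularization is, if anything, cleaner.
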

\begin{proof}
Denote the centroid and the area of $\Omega\subset \R^2$ as $C(\Omega)$ and $S(\Omega)$, respectively. For $\Omega\in K$, we have $S((\Omega\backslash P_n(\Omega))\cup(P_n(\Omega)\backslash\Omega))\leq\frac{C}{n}$ for a constant $C>0$, since the Lipschitz constant of $\partial\Omega$ has an upper bound, and the regions in $K$ are also bounded. It is not difficult to find that the area of $\Omega\in K$ has a positive lower bound, denoted by $S_0>0$ (choose a $\delta$ small enough for each $\Omega$ such that $S(\Omega')>\frac{1}{2}S(\Omega)$ for any $\Omega'\in B(\Omega,\delta)$, then consider the open covering), as well as an upper bound $S_1>S_0$. Hence $S(P_n(\Omega))\geq S_0-\frac{C}{n} \geq \frac{1}{2}S_0$ for $n$ large enough. Let $M$ be an upper bound of $|x|$ for $x\in\Omega\in K$, then
\begin{equation}
\begin{split}
d_{E}(C(\Omega),C(P_n(\Omega)))=&\norm{\frac{\int_\Omega xdx}{S(\Omega)}-\frac{\int_{P_n(\Omega)}xdx}{S(P_n(\Omega))}}_2 \\
=&\norm{\frac{(S(P_n(\Omega))-S(\Omega))\int_\Omega xdx+S(\Omega)(\int_\Omega xdx-\int_{P_n(\Omega)} xdx)}{S(\Omega)S(P_n(\Omega))}}_2 \\
\leq&\frac{CMS_1}{nS_0^2},
\end{split}
\end{equation}
for $n$ large enough. It immediately leads to
\begin{equation}\label{eq:centroid}
\lim_{n\to\infty}\sup_{\Omega\in K}d_{E}(C(\Omega),C(P_n(\Omega)))=0.
\end{equation}

Denote by $b_{\Omega}$ the boundary function of $\Omega$ as defined in Definition \ref{def:polar_region}. Let $e(\theta):=(\cos(\theta),\sin(\theta))$. Denote the intersection point in $\{C(\Omega)+t(b_{P_n(\Omega)}(\theta)e(\theta)+C(P_n(\Omega))-C(\Omega))|t\geq0\}\cap\partial\Omega$ as $Q$. Then
\begin{equation}
\begin{split}
|b_{\Omega}(\theta)-b_{P_n(\Omega)}(\theta)|=&\norm{b_{\Omega}(\theta)e(\theta)-b_{P_n(\Omega)}(\theta)e(\theta)}_2 \\
=&\big\|b_{\Omega}(\theta)e(\theta)+C(\Omega)-Q \\
&+Q-(b_{P_n(\Omega)}(\theta)e(\theta)+C(P_n(\Omega))) \\
&+b_{P_n(\Omega)}(\theta)e(\theta)+C(P_n(\Omega))-C(\Omega)-b_{P_n(\Omega)}(\theta)e(\theta)\big\|_2 \\
\leq&\norm{b_{\Omega}(\theta)e(\theta)+C(\Omega)-Q}_2+\frac{2\pi L}{n}+d_E(C(\Omega),C(P_n(\Omega))).
\end{split}
\end{equation}
Denote the angle between $Q-C(\Omega)$ and $b_{\Omega}(\theta)e(\theta)$ as $\alpha$, then $0\leq\alpha\leq\pi$ and
\begin{equation}
\begin{split}
\cos(\alpha)=&\frac{b_{P_n(\Omega)}(\theta)^2+\norm{b_{P_n(\Omega)}(\theta)e(\theta)+C(P_n(\Omega))-C(\Omega)}_2^2-d_E(C(\Omega),C(P_n(\Omega)))^2}{2b_{P_n(\Omega)}(\theta)\norm{b_{P_n(\Omega)}(\theta)e(\theta)+C(P_n(\Omega))-C(\Omega)}_2}\\
\geq&1-C_1d_E(C(\Omega),C(P_n(\Omega)))^2,
\end{split}
\end{equation}
where $C_1>0$ is a constant. The last inequality is due to $b_{P_n(\Omega)}(\theta)$ has a lower bound. Thus we have $\lim_{n\to\infty}\alpha=0$. Consequently,
\begin{equation}
\begin{split}
\norm{b_{\Omega}(\theta)e(\theta)+C(\Omega)-Q}_2^2=&b_{\Omega}(\theta)^2+\norm{Q-C(\Omega)}_2^2-2b_{\Omega}(\theta)\norm{Q-C(\Omega)}_2\cos(\alpha) \\
=&(b_{\Omega}(\theta)-\norm{Q-C(\Omega)}_2)^2+2b_{\Omega}(\theta)\norm{Q-C(\Omega)}_2(1-\cos(\alpha))\\
\leq&L^2\alpha^2+C_2d_E(C(\Omega),C(P_n(\Omega)))^2,
\end{split}
\end{equation}
for a constant $C_2>0$. Subsequently, we obtain
\begin{equation}
\begin{split}
|b_{\Omega}(\theta)-b_{P_n(\Omega)}(\theta)|\leq&\sqrt{L^2\alpha^2+C_2d_E(C(\Omega),C(P_n(\Omega)))^2}+\frac{2\pi L}{n}+d_E(C(\Omega),C(P_n(\Omega))),
\end{split}
\end{equation}
and then
\begin{equation}\label{eq:boundary}
\lim_{n\to\infty}\sup_{\Omega\in K}\norm{b_{\Omega}-b_{P_n(\Omega)}}_{C[0,2\pi]}=0.
\end{equation}
The equations \eqref{eq:centroid} and \eqref{eq:boundary} lead to the final result.
\end{proof}
\noindent This theorem implies that the mapping $\hat{\mathcal{G}}$ can be actually learned by MIONet. 

Up to now, we have completed the basic theory for problems defined on varying domains. We next summarize this method.

\subsection{Method}
Recall that the targeted mapping we need to learn is
\begin{equation}
\begin{split}
    \mathcal{G}:X=\bigsqcup_{\Omega\in U}C (\overline{\Omega}) & \rightarrow \bigsqcup_{\Omega\in U}C (\overline{\Omega}) \\
     f_{\Omega} & \mapsto u_{\Omega}.
\end{split}
\end{equation}
In the case of Poisson equation, the mapping $\mathcal{G}$ preserves the domains, so that $\mathcal{G}$ can be written as
\begin{equation}
\mathcal{G}=\sigma^{-1} \circ (\pi_1, \hat{\mathcal{G}}) \circ \sigma,
\end{equation}
where
\begin{equation}
\begin{split}
    \hat{\mathcal{G}}:=\pi_2\circ\sigma \circ \mathcal{G} \circ \sigma^{-1}\quad:\quad  &K_1\quad\times\quad K_2\quad \longrightarrow \quad C(B(0,1)), \\
       &\cap ~~~~~~~~~~~ \cap \\
       &U ~~~~~~~C(B(0,1))
\end{split}
\end{equation}
for a compact $K_1\subset U$ and a compact $K_2\subset C(B(0,1))$. Theorem \ref{thm:approximation} and Theorem \ref{thm:assumption} ensure that $\hat{\mathcal{G}}$ can be learned by MIONet. Assume that we have a dataset
\begin{equation}
\mathcal{T}=\{f_{\Omega_i}^i,u_{\Omega_i}^i\}_{i=1}^N,\quad \mathcal{G}(f_{\Omega_i}^i)=u_{\Omega_i}^i.
\end{equation}
We use a MIONet denoted by $\mathcal{M}$ to learn the corresponding $\hat{\mathcal{G}}$. The loss function can be written as
\begin{equation}
L(\theta)=\frac{1}{N}\sum_{i=1}^N\norm{\mathcal{M}(\sigma(f_{\Omega_i}^i);\theta)-\pi_2\circ\sigma(u_{\Omega_i}^i)}^2.
\end{equation}
After training, we predict a solution $u_\Omega$ for input $f_\Omega$ by
\begin{equation}
u_\Omega=\sigma^{-1} \circ (\pi_1, \mathcal{M}) \circ \sigma(f_{\Omega}).
\end{equation}
Note that $\sigma(f_{\Omega_i}^i)$ and $\pi_2\circ\sigma(u_{\Omega_i}^i)$ are preprocessed based on the dataset before training. An illustration of this method is shown in Figure \ref{fig:illustration}. The method of the fully-parameterized case can be found in Appendix \ref{app:full}.

\begin{figure}[htbp]
\centering
\includegraphics[width=1.0\textwidth]{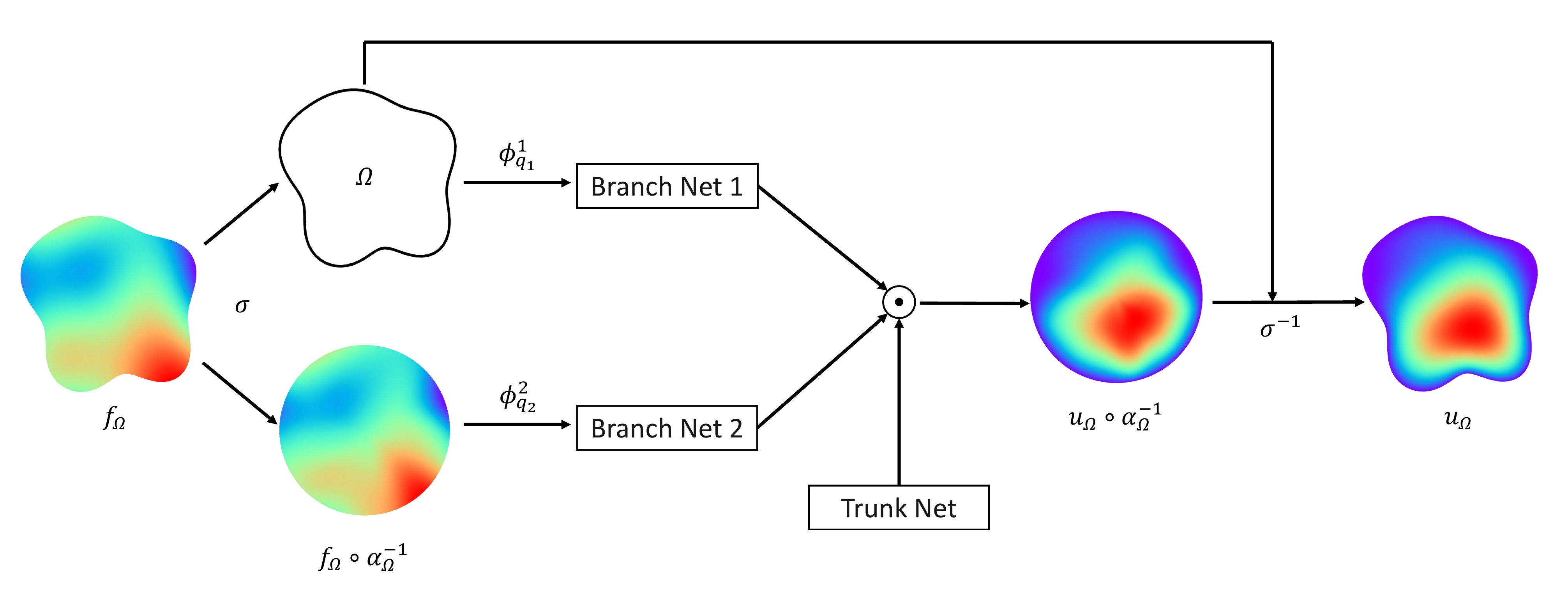}
\caption{An illustration of the method.}
\label{fig:illustration}
\end{figure}

\section{Numerical experiments}\label{sec:experiment}
We will validate the effectiveness of MIONet for PDEs defined on varying domains through several numerical experiments.

Consider the following Poisson equation:
\begin{equation}\label{eq:exp_poisson}
	\begin{cases}
		-\Delta u=f &\quad \mbox{in} \; \Omega,
		\\
		u = 0 &\quad \mbox{on} \; \partial\Omega.
	\end{cases}	
\end{equation}
Our objective is to learn the mapping 
\begin{equation}
    \mathcal{G}:f_{\Omega}\mapsto u_{\Omega},\quad f_{\Omega},u_{\Omega}\in C(\overline{\Omega}),
\end{equation}
which corresponding to
\begin{equation}
    \hat{\mathcal{G}}:(\Omega, f_{\Omega}\circ \alpha_{\Omega}^{-1})\mapsto u_{\Omega}\circ \alpha_{\Omega}^{-1},\quad f_{\Omega}\circ \alpha_{\Omega}^{-1},u_{\Omega}\circ \alpha_{\Omega}^{-1}\in C(B(0,1)).
\end{equation}

\begin{table}[htbp]
\centering
\begin{tabular}{|c|c|c|c|c|}
\hline
                  & Quadrilateral & Pentagon & Hexagon  & Smooth boundary  \\ \hline
$L^{2}$ error          & 1.50e-04      & 1.74e-04 & 2.00e-04 & 7.83e-05       \\ \hline
Relative $L^{2}$ error & 3.04e-02      & 2.80e-02 & 2.82e-02  & 3.00e-02      \\ \hline
\end{tabular}
\caption{$L^{2}$ errors and relative $L^{2}$ errors of different cases.}
\label{tab:errors}
\end{table}

The numerical result of the fully-parameterized case is shown in Appendix \ref{app:full}.

\subsection{Polygonal regions}

We first consider a simple case, in which the regions are convex polygons. We initially generate 1500 convex quadrilaterals (pentagons/hexagons) contained within $[0,1]^{2}$, totally 4500 regions. Then for each region, we generate a triangular mesh, thus we form the set of corresponding meshes. Following this, we generate 4500 random functions on $[0,1]^{2}$ via Gaussian Process (GP) with RBF kernel, forming the set of random functions for $f_{\Omega}$. We employ the finite element method to solve these 4500 equations of \eqref{eq:exp_poisson}, obtaining the set of solutions based on the meshes. We generate 5000 random but fixed points in $B(0,1)$ (the values of $f\in B(0,1)$ at these points are regarded as the coordinates of the truncated Schauder basis for $B(0,1)$), mapping them to the points in the polygons using $\alpha^{-1}_{\Omega}:B(0,1)\rightarrow \Omega$. Then we obtain the set of $f_{\Omega}\circ\alpha^{-1}_\Omega$ evaluated at such 5000 fixed points in $B(0,1)$. Finally, we use 200 points to encode the regions $\Omega$. Instead of using spatial coordinates, here we use 200 radii under polar coordinates to reduce the dimension. The dataset preprocessed for training can be written as
\begin{equation}
\mathcal{T}=\{(\phi_{200}^{1}(\Omega_i),\phi_{5000}^{2}(f_{\Omega_i}\circ\alpha^{-1}_{\Omega_i})),\phi_{5000}^{2}(u_{\Omega_i}\circ\alpha^{-1}_{\Omega_i})\}_{i=1}^{4500}.
\end{equation}

The architecture of our trained MIONet is as follows: the network comprises two branch nets. The first branch net encodes the information of input regions, i.e., a tensor of size (4500, 200). Its size is [200, 500, 500, 500, 500, 1000]. The second branch net deals with the information of input functions, i.e., a tensor of size (4500, 5000), and it has only one linear layer without bias as [5000, 1000], since $\hat{\mathcal{G}}$ is linear with respect to the second input. Additionally, the network includes a trunk net that encodes the output functions in $B(0,1)$, and its size is [2, 500, 500, 500, 500, 1000]. We use ReLU as the activation function, and compute the loss using MSE. During training, we employ the Adam \cite{kingma2014adam} optimizer with a learning rate of $10^{-6}$, and run the training for $5\times10^6$ iterations.

We employ the trained MIONet to predict the solutions of Poisson equations defined on arbitrarily convex $4, 5, 6$-polygons. The numerical results are shown in Table \ref{tab:errors}, and several examples are illustrated in Figure \ref{fig:polygon}. The prediction achieves nearly a $3\%$ relative $L^2$ error.

\begin{figure}[htbp]
\centering
\includegraphics[width=1.0\textwidth]{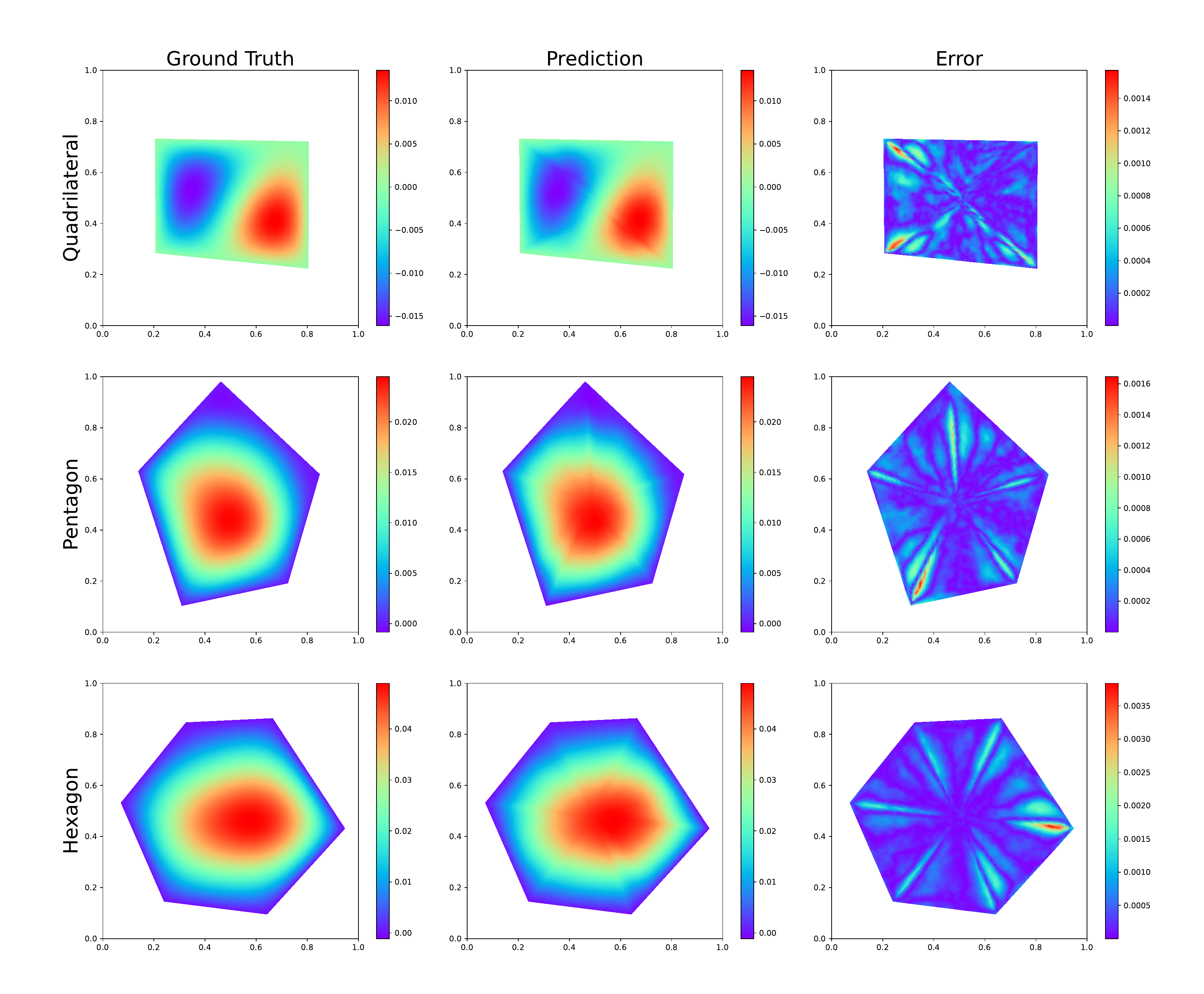}
\caption{Examples of predictions for Poisson equations defined on 4,5,6-polygons.}
\label{fig:polygon}
\end{figure}

\subsection{Polar regions with smooth boundary}

Now we consider the polar regions with smooth boundary. Note that the polar regions are unnecessary to be convex. First, we generate the required data. Initially, we use GP to generate 2500 random one-dimensional periodic smooth functions as the boundaries of regions. Similarly, we generate 2500 related meshes as well as 2500 random functions for $f_{\Omega}$, and then use the finite element method to solve these 2500 equations of \eqref{eq:exp_poisson}, obtaining the solutions of the Poisson equations on the meshes. Subsequently, we generate 5000 random but fixed points in $B(0,1)$ and map them to the generated polar regions using $\alpha^{-1}_{\Omega}$ to encode the original functions. We also choose 200 points to encode the regions as before. The dataset preprocessed for training can be written as
\begin{equation}
\mathcal{T}=\{(\phi_{200}^{1}(\Omega_i),\phi_{5000}^{2}(f_{\Omega_i}\circ\alpha^{-1}_{\Omega_i})),\phi_{5000}^{2}(u_{\Omega_i}\circ\alpha^{-1}_{\Omega_i})\}_{i=1}^{2500}.
\end{equation}
The architecture of the MIONet as well as the training parameters are the same as the previous experiment.

We present in Figure \ref{fig:smooth_boundary} the examples of using the model to predict the solutions of Poisson equations for three randomly generated polar regions and corresponding random functions of $f_\Omega$. The numerical results are also shown in Table \ref{tab:errors}. In this case MIONet also achieves a relative $L^2$ error of $3\%$, however, the predictive performance is superior to that for convex polygons, due to the favorable properties of the smooth boundaries.

\begin{figure}[htbp]
\centering
\includegraphics[width=1.0\textwidth]{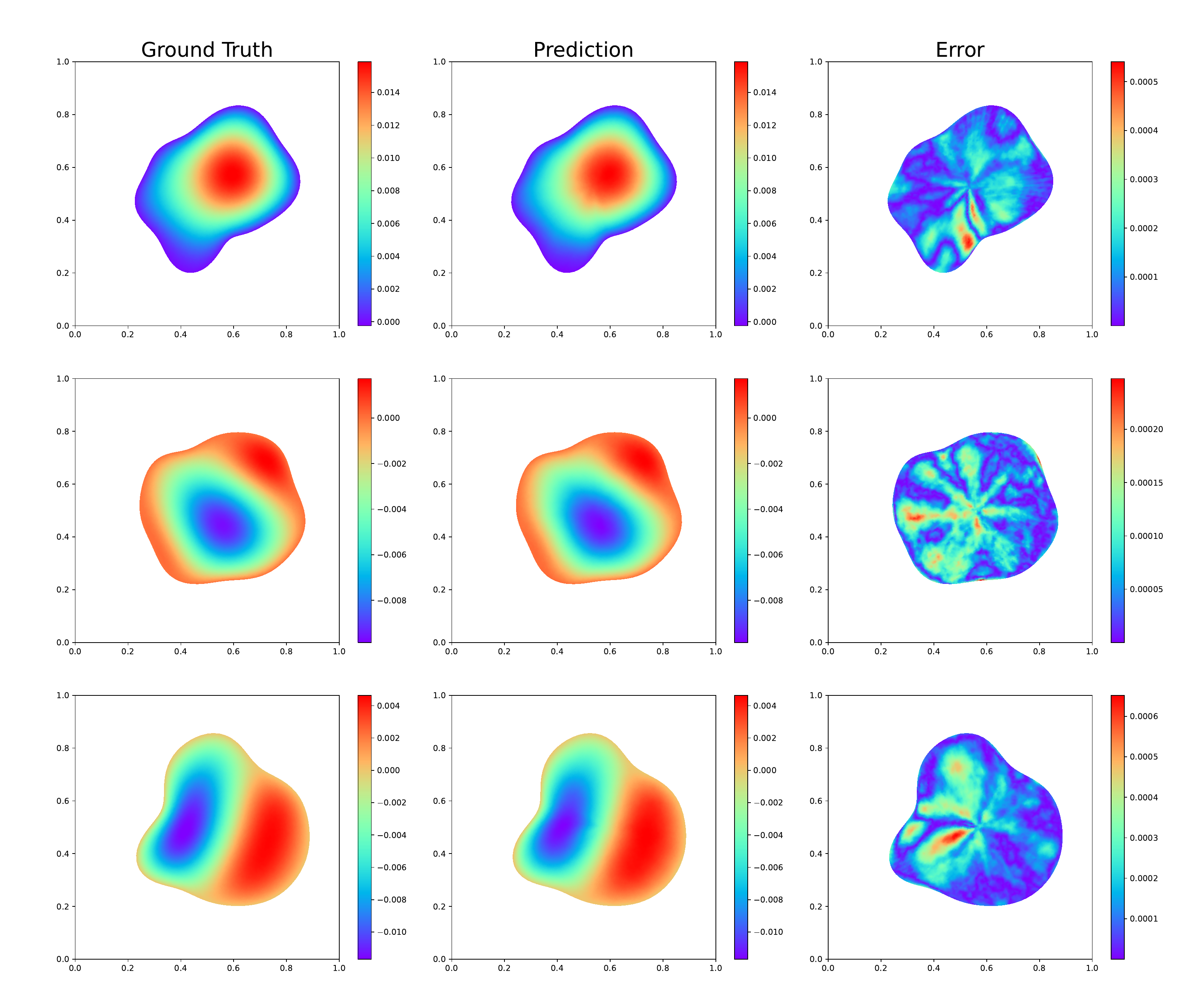}
\caption{Examples of predictions for Poisson equations defined on polar regions with smooth boundary.}
\label{fig:smooth_boundary}
\end{figure}

\subsection{Influence of mesh}

In engineering, the information of a given PDE problem is based on the mesh. Next we point out that the proposed method is meshless, and we test the influence of different meshes on one problem. Now assume that we have already trained a MIONet on a dataset, and $(f_{\Omega}, u_{\Omega})$ is a data point from the test set. We generate several meshes of different sizes for $\Omega$. Note that these meshes are also inconsistent on boundary, i.e., their nodes on $\partial\Omega$ are different.

Below, we conduct experiments on smooth polar regions with different boundary point samplings. In the previous experiment, for smooth boundaries generated by random functions, we sampled 100 points for discretization. In this section, we have chosen 100, 50, and 25 points, respectively, to discretize the boundary. Additionally, on these discretized polygons, meshes of sizes 0.01, 0.02, and 0.04 were employed, respectively. Finally, we utilized the trained MIONet to make predictions on these three regions with their corresponding meshes.

Our predictive results are presented in Table \ref{tab:mesh} and Figure \ref{fig:mesh}. The numerical results show that three different levels of discretization leads to similar errors. It can be observed that the predictive outcomes remain consistent even with variations in the sampling of smooth boundaries and meshing methods. This implies that our model is not significantly influenced by the discretization approach employed for the region.

\begin{table}[htbp]
\centering
\begin{tabular}{|c|c|c|c|}
\hline
    $\#$Boundary points             & 100    & 50     & 25     \\ \hline
$L^2$ error          & 4.24e-05  & 4.19e-05 &  4.47e-05 \\ \hline
Relative $L^2$ error & 4.59e-02  & 4.60e-02  & 5.12e-02  \\ \hline
\end{tabular}
\caption{$L^{2}$ errors and relative $L^{2}$ errors of different sizes of boundary points and meshes}
\label{tab:mesh}
\end{table}

\begin{figure}[htbp]
\centering
\includegraphics[width=1.0\textwidth]{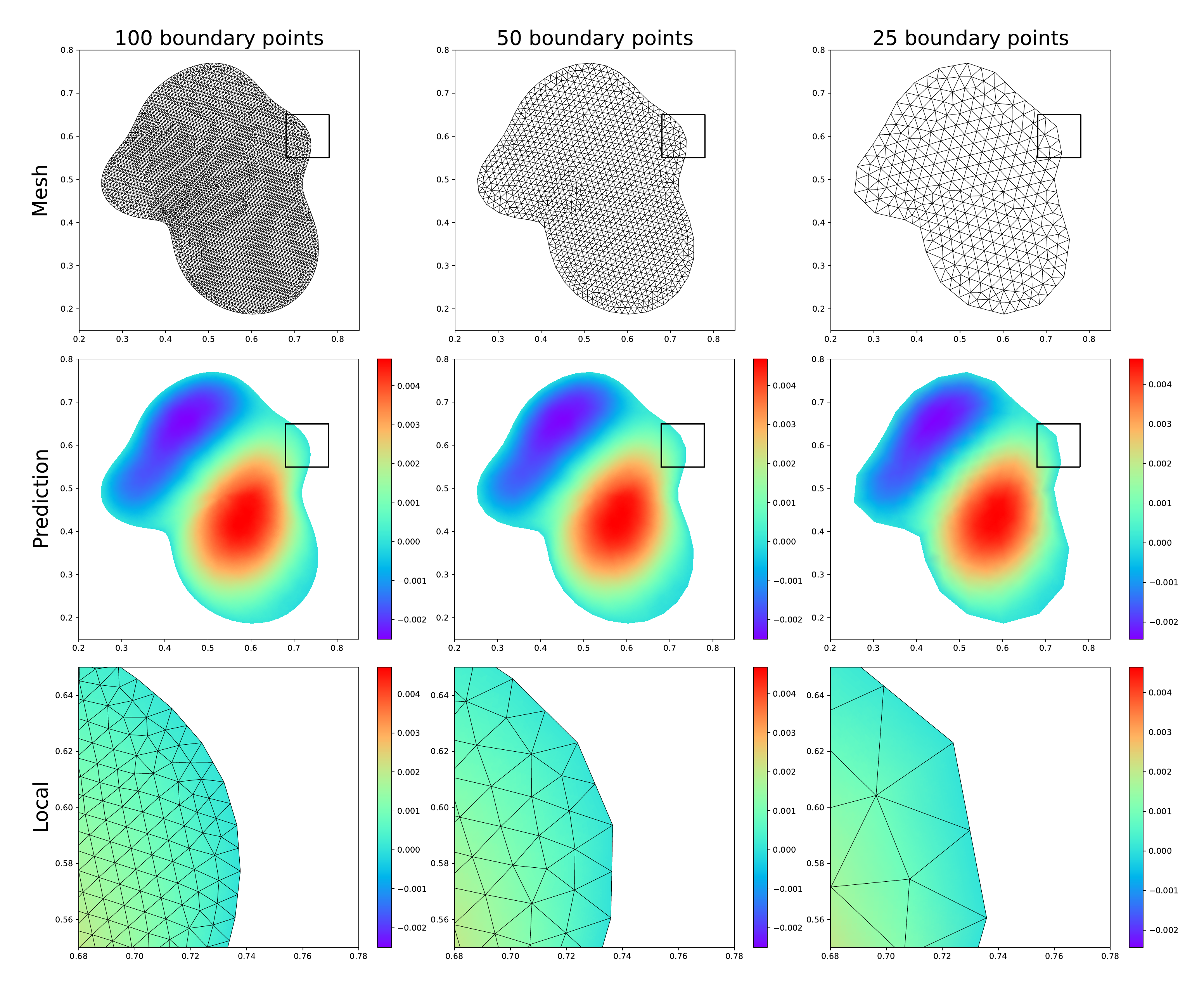}
\caption{Predictions for different levels of discretization on one task.}
\label{fig:mesh}
\end{figure}

\section{Conclusions}\label{sec:conclusions}

In this paper, we introduce a framework rooted in MIONet to address PDEs with varying domains. We first broaden the approximation theorem of MIONet to encompass metric spaces. Consequently, this extension enables its application to the metric space comprising an uncountable set of polar regions. We then present an algorithm tailored for PDEs with varying domains, hence we are able to learn the solution mapping of a PDE with all the parameters varying, including the parameters of the differential operator, the right-hand side term, the boundary condition, as well as the domain. We for example perform the experiments for 2-d Poisson equations, where the domains and the right-hand side terms are varying. The results provide insights into the method's performance across convex polygons, polar regions with smooth boundary, and predictions for different levels of discretization on one task. We also show the additional result of the fully-parameterized case in the appendix for interested readers.

As a meshless method, it is potentially used to train large models acting as general solvers for PDEs. Especially, we can employ this method as a tool for correcting low-frequency errors for the traditional numerical iterative solver, which has been studied as the hybrid iterative method \cite{hu2024hybrid}. In future works, we expect to further develop this method to deal with more complicated geometric regions, thus make such general solvers more powerful.

\section*{Acknowledgments}

This research is supported by National Natural Science Foundation of China (Grant Nos. 12171466 and 12271025).

\appendix

\section*{Appendix}

\section{Fully-parameterized solution operator}\label{app:full}
As we stated before, there is no difficulty in dealing with the fully-parameterized case as long as the simplified case of varying $f_\Omega$ is solved. Consequently we consider the fully-parameterized Poisson equation:
\begin{equation}\label{eq:full_poisson}
    \begin{cases}
		-\nabla\cdot(k\nabla u)=f &\quad \mbox{in} \; \Omega,
		\\
		u = g &\quad \mbox{on} \; \partial\Omega,
	\end{cases}
\end{equation}
where all the parameters including $k$, $f$, $g$ and $\Omega$ are changing, hence the solution mapping we aim to learn is
\begin{equation}
    \mathcal{G}:(k_{\Omega}, f_{\Omega}, g_{\partial\Omega}) \mapsto u_{\Omega}.
\end{equation}
Similarly, $\mathcal{G}$ can be rewritten as
\begin{equation}
\mathcal{G}=\sigma_2 \circ (\pi_1, \hat{\mathcal{G}}) \circ \sigma_1,
\end{equation}
where
\begin{equation}
    \begin{cases}
		\sigma_1:(k_{\Omega}, f_{\Omega}, g_{\partial\Omega})\mapsto \left(\Omega, k_\Omega\circ\alpha_{\Omega}^{-1},\left(f_\Omega\circ\alpha_{\Omega}^{-1},g_{\partial\Omega}\circ(\alpha_{\Omega}^{-1})|_{\partial B(0,1)}\right)\right), \\
		\sigma_2:(\Omega,u_\Omega\circ\alpha_\Omega^{-1})\mapsto u_\Omega, \\
		\pi_1:\left(\Omega, k_\Omega\circ\alpha_{\Omega}^{-1},\left(f_\Omega\circ\alpha_{\Omega}^{-1},g_{\partial\Omega}\circ(\alpha_{\Omega}^{-1})|_{\partial B(0,1)}\right)\right)\mapsto \Omega, \\
		\pi_2:(\Omega,u_\Omega\circ\alpha_\Omega^{-1})\mapsto u_\Omega\circ\alpha_\Omega^{-1},
	\end{cases}
\end{equation}
and
\begin{equation}
\begin{split}
	&\Omega ~~~~~~~~ k_\Omega\circ\alpha_{\Omega}^{-1}~~~~~~(f_\Omega\circ\alpha_{\Omega}^{-1},g_{\partial\Omega}\circ(\alpha_{\Omega}^{-1})|_{\partial B(0,1)})\mapsto u_\Omega\circ\alpha_\Omega^{-1} \\
    \hat{\mathcal{G}}:=\pi_2\circ\sigma_2^{-1}\circ\mathcal{G} \circ\sigma_1^{-1}:\quad  &K_1\quad\times\quad K_2\quad\quad\quad\quad\times\quad\quad\quad\quad K_3\quad\quad\longrightarrow\quad\quad C(B(0,1)). \\
       &\cap ~~~~~~~~~~ \cap ~~~~~~~~~~~~~~~~~~~~~~~~~~~~~ \cap\\
       &U ~~~~~~C(B(0,1))~~~~~~~~~~C(B(0,1))\times C(\partial B(0,1))
\end{split}
\end{equation}
Note that here we put the $f$ and $g$ together as a Cartesian product since the solution operator of Eq. \eqref{eq:full_poisson} is linear with respect to $(f,g)$. So that we use a MIONet to learn $\hat{\mathcal{G}}$, which has three branch nets. The first branch net encodes $\Omega$, the second encodes $k_\Omega\circ\alpha_{\Omega}^{-1}$, and the third encodes $(f_\Omega\circ\alpha_{\Omega}^{-1},g_{\partial\Omega}\circ(\alpha_{\Omega}^{-1})|_{\partial B(0,1)})$. Moreover, the third branch net is set to linear. With a trained MIONet $\mathcal{M}$, we make prediction given $(k_\Omega,f_\Omega,g_{\partial\Omega})$ by
\begin{equation}
u_{\Omega}^{\rm pred}=\sigma_2 \circ (\pi_1, \mathcal{M}) \circ \sigma_1(k_\Omega,f_\Omega,g_{\partial\Omega}).
\end{equation}

For experiment, we generate 4500 polar regions with smooth boundary, subsequently create 4500 corresponding triangular meshes based on these regions. Utilizing these meshes, we generate 4500 random functions $f_{\Omega}$, $k_{\Omega}$, and $g_{\partial\Omega}$ for the regions, and then solve equations \eqref{eq:full_poisson} employing the finite element method. Similar to the preceding experiments, we generate 5000 random but fixed points within $B(0,1)$ and employ $\alpha^{-1}_{\Omega}$ to establish mappings between $B(0,1)$ and $\Omega$. Additionally, we select 200 points to encode the regions and the boundary conditions. The training dataset is structured as follows:
\begin{equation}
\begin{split}
    \mathcal{T}=\{&[\phi_{200}^{1}(\Omega_i),\phi_{5000}^{2}(k_{\Omega_i}\circ\alpha^{-1}_{\Omega_i}),(\phi_{5000}^{2}(f_{\Omega_i}\circ\alpha^{-1}_{\Omega_i}), \phi_{200}^{3}(g_{\partial\Omega_i}\circ(\alpha^{-1}_{\Omega_i})|_{\partial B(0,1)}))], \\ &\phi_{5000}^{2}(u_{\Omega_i}\circ\alpha^{-1}_{\Omega_i})\}_{i=1}^{4500}.
\end{split}
\end{equation}
As for the architecture, the first branch network encodes information from the input regions, represented as a tensor of size (4500, 200), with neurons [200, 500, 500, 500, 500, 1000]. The second branch network processes information from the input functions $k$ as a tensor of size (4500, 5000), with neurons [5000, 500, 500, 500, 500, 1000]. The third branch network handles information from the input function $f$ and $g$, which is represented as a tensor of size (4500, 5200), featuring only one linear layer without bias, with neurons [5200, 1000]. Additionally, the network includes a trunk network responsible for encoding the output functions within $B(0,1)$, with neurons [2, 500, 500, 500, 500, 1000]. The training parameters remain consistent with those of the previous experiment.

The result of our model predicting fully-parameterized Poisson equation on the polar region is depicted in Figure \ref{fig:full}. The prediction achieves $4.30\times 10^{-4}$ $L^{2}$ error and 3.14$\%$ relative $L^{2}$ error.

\begin{figure}[htbp]
\centering
\includegraphics[width=1.0\textwidth]{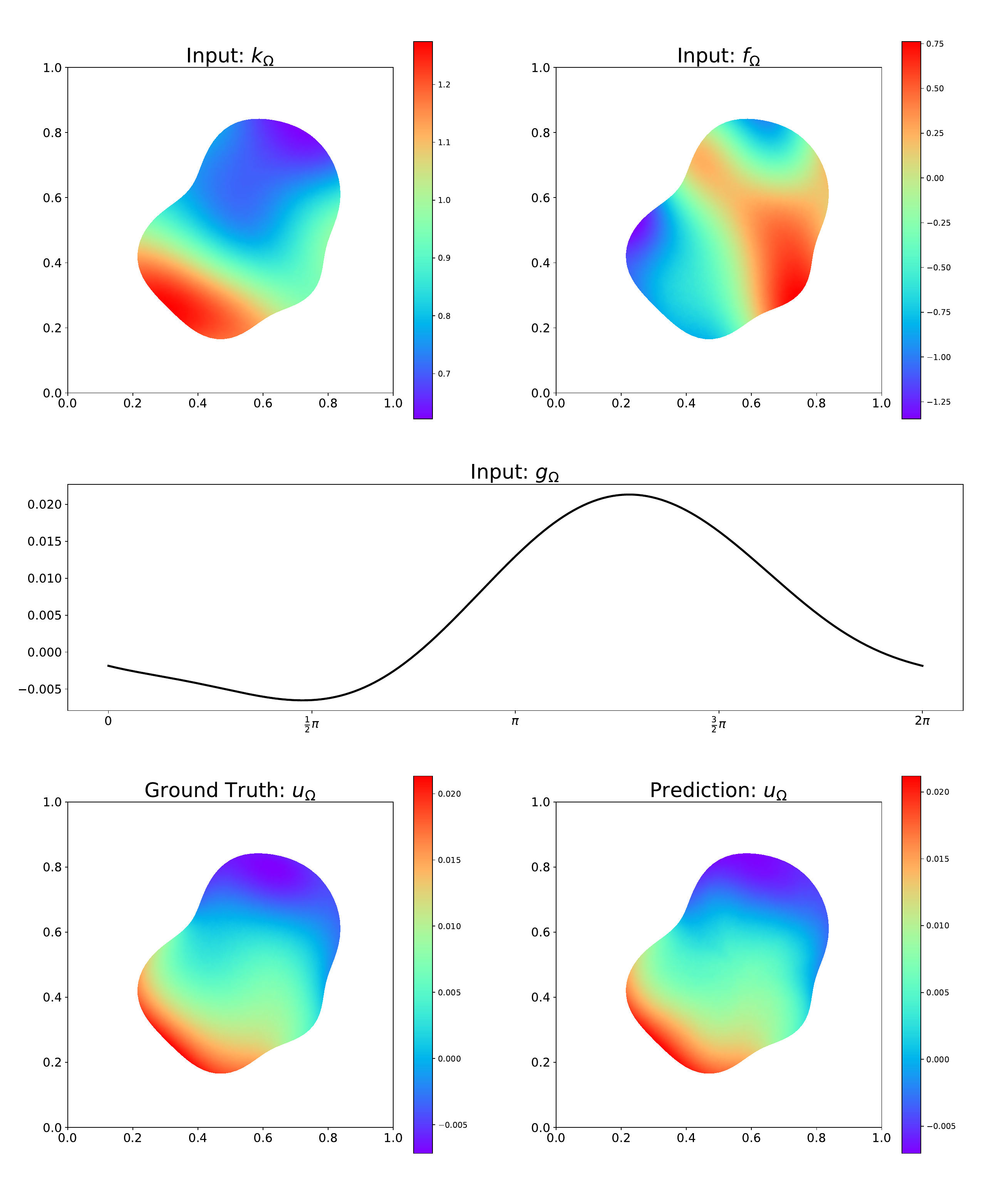}
\caption{An example of prediction for fully-parameterized Poisson equation.}
\label{fig:full}
\end{figure}

\bibliographystyle{abbrv}
\bibliography{ref}

\end{document}